\title{A Wiener Process Perspective on Local Intrinsic Dimension Estimation Methods}
\author{
    Piotr Tempczyk\equalcontrib\textsuperscript{\rm 1,2,3},
    Łukasz Garncarek\textsuperscript{\rm 3,4},
    Dominik Filipiak\textsuperscript{\rm 5,6,7},
    Adam Kurpisz\equalcontrib\textsuperscript{\rm 3,8,9}
}
\begin{document}

\newtheorem{theorem}{Theorem}[section]
\newtheorem{proposition}[theorem]{Proposition}
\newtheorem{fact}[theorem]{Fact}
\newtheorem{corollary}[theorem]{Corollary}
\newtheorem{lemma}[theorem]{Lemma}
\newtheorem*{FSLD}{Fick's Second Law of Diffusion}

\theoremstyle{definition}
\newtheorem{definition}[theorem]{Definition}

\newcommand{\NN}[2]{\mathcal{N}\left(#1, #2\right)}
\newcommand{\RR}{\mathbb{R}}
\newcommand{\one}{\mathbb{1}}
\newcommand{\norm}[1]{\left\lVert{#1}\right\rVert}

\makeatletter
\newcommand{\dummylabel}[2]{\def\@currentlabel{#2}\label{#1}}
\makeatother
\dummylabel{app:laplacian-diffused-density}{B}
\dummylabel{lem:app:laplacian-rho-t-everywhere}{B.1}
\dummylabel{cor:laplacian-rho-t-on-manifold}{B.2}
\dummylabel{app:equivalent-formulations-of-lidl}{C}
\dummylabel{prop:app:equivalent-asymptotics}{C.1}
\dummylabel{prop:app:equivalent-lidl}{C.2}
\dummylabel{prop:app:beta_final}{C.3}
\dummylabel{lem:convex-combination}{D.1}
\dummylabel{lem:convex-combination-coeff-estimate}{D.2}
\dummylabel{eq:convex_comb_of_beta}{19}

\maketitle

\begin{abstract}
    Local intrinsic dimension (LID) estimation methods have received a lot of attention in recent years thanks to the progress in deep neural networks and generative modeling. In opposition to old non-parametric methods, new methods use generative models to approximate diffused dataset density to scale the methods to high-dimensional datasets (e.g. images). In this paper, we investigate the recent state-of-the-art parametric LID estimation methods from the perspective of the Wiener process. We explore how these methods behave when their assumptions are not met. We give an extended mathematical description of those methods and their error as a function of the probability density of the data.
\end{abstract}

\section{Introduction}

LID estimation has gained increasing attention in recent years as part of the fast-growing field of topological data analysis.
It was able to progress from non-parametric to parametric methods thanks to the latest progress in the field of generative modeling.
LID estimation methods are algorithms for estimating the manifold's dimension from which the data point $x$ was sampled.
In these methods, we assume that the data lie on the union of one or more manifolds that may be of different dimensions.

The estimation of intrinsic dimension is substantial for data analysis and machine learning \citep{ansuini2019intrinsic,li2018measuring,rubenstein2018latent} and was investigated in relation to dimensionality reduction and clustering \citep{vapnik2013nature,kleindessner2015dimensionality,camastra2016intrinsic}, analyzing the training and representation learning processes within deep neural networks \citep{li2018measuring,ansuini2019intrinsic,pope2020intrinsic,loaiza2024deep}, verifying the union of manifolds hypothesis for images \cite{brown2022verifying}, and used to improve out-of-distribution detection algorithm \cite{pmlr-v235-kamkari24a}.

Prior to introducing LIDL \cite{tempczyk2022lidl}, two approaches to solve the intrinsic dimension estimation problem were presented.
The first employs global methods; see, e.g.,~\citet{Fukunaga1971,minka2000,fan2010}.
These methods are known to suffer from issues related to a manifold curvature and non-uniformity of the data distribution.
They also assume that data lies on a single manifold of constant dimension, so dimensionality is the same for all $x$.
The second approach is based on local non-parametric methods that explore the geometric properties of neighborhoods~\cite{johnsson2014low,levina2004maximum} calculating some statistics using points from the neighborhood of $x$.
Although all aforementioned methods perform reasonably well for a small number of dimensions, the higher dimensionality negatively affects their performance~\cite{tempczyk2022lidl,campadelli2015intrinsic,camastra2016intrinsic}. 

Very recently, a new approach emerged for the local methods, which is based on a two-step procedure. In the first step, the dataset is perturbed with some noise, and in the second step, its dimensionality is estimated using various techniques. These include generative models to analyze changes in density~\cite{tempczyk2022lidl,kamkari2024geometric} or in singular values of the Jacobian ~\cite{horvat2022intrinsic,horvat2024gauge} for different noise magnitudes, and analyzing rank of scores from diffusion model as recently presented by~\citet{stanczuk2024diffusion}.

While new algorithms provide state-of-the-art results for large, real-life datasets, it is vital for their performance that adding noise to the dataset should be as close as possible to an injective process.
This means that the resulting densities after adding noise should uniquely identify the original distribution of the dataset.
Otherwise, it is impossible to neither uniquely reverse the process nor analyze the original structure of the dataset.
One example in the theoretical model that does not admit such a problem is a flat manifold with a uniform distribution of data points.
In such cases these algorithms perform best, as shown in experimental results. 

In other cases, a hypothetical line of research would be to decrease the magnitude of noise added to the data set so that the manifold is approximately flat and has locally uniform data density in the scale of the noise magnitude.
This, however, is not possible in practice for several reasons.
For instance, data is often a set of discrete points (especially in audio and visual modality), and considering the noise of magnitude much smaller than the minimum distance between points does not lead to any meaningful process.
Moreover, neural net training is done with finite precision and the stochastic gradient descent procedure introduces noise to the density estimates, so very small changes in density cannot be observed in practice, which may lead to poor quality estimates of LID.

In this paper, we point out and exploit the fact that adding Gaussian noise of varying magnitudes can be seen as studying the evolution of the Wiener process describing the diffusion of particles (points of the dataset) in the ambient space.
This point of view enables us to employ Fick's Second Law of Diffusion to eliminate time derivatives from mathematical descriptions of state-of-the-art LID algorithms~\cite{tempczyk2022lidl,kamkari2024geometric}, and replace them with spatial derivatives.
Such considerations can be taken into account in the second step of the considered algorithms, leading to more accurate results.

\paragraph{Contribution.}
\begin{enumerate}
    \item We recognize and define new categories (isolated and holistic algorithms) for the Wiener process-based parametric LID estimation algorithm family and categorize the existing algorithms accordingly.
    \item We explore the first step of existing algorithms in the language of Wiener processes and calculate important cases of diffusion from lower-dimensional manifolds with non-uniform probability density into ambient space.
    \item We derive closed-form expressions for important parameters used in two state-of-the-art isolated LID estimation algorithms as a function of on-manifold density and manifold dimensionality, which can be viewed as closed-form expressions for deviation from a flat manifold with uniform distribution case.
\end{enumerate}

\section{Related work}
\label{sec:related}
The review of the non-parametric methods for local and global intrinsic dimension estimation can be found in the work of \citet{campadelli2015intrinsic}, or \citet{camastra2016intrinsic}.
\citet{tempczyk2022lidl} compared these methods on bigger datasets in terms of their dimensionality.

Although we do not analyze non-parametric methods in this paper, it is worth mentioning a recent work on non-parametric LID estimation, in which \citet{bailey2022local} explore the connection of LID to other well-known measures for complexity: entropy and statistical divergences, and develop new analytical expressions for these quantities in terms of LID.
Consequently, \citet{bailey2023relationships} establish the relationships for cumulative Shannon entropy, entropy power, Bregman formulation of cumulative Kullback-Leibler divergence, and generalized Tsallis entropy variants, and propose four new estimators of LID, based on nearest neighbor distances.

During the last few years many parametric methods for estimating LID emerged.  
\citet{zheng2022learning} prove that VAE are capable of recovering the correct manifold dimension and demonstrate methods to learn manifolds of varying dimensions across the data sample.
\citet{yeats2023adversarial} connect the adversarial vulnerability of score models with the geometry of the underlying manifold they capture.
They show that minimizing the Dirichlet energy of learned score maps boosts their robustness while revealing the LID of the underlying data manifold.

\paragraph{Wiener process-based algorithms.}
Regarding parametric methods, there is a group of algorithms, that have one thing in common: they simulate a Wiener process on a dataset and directly use some properties of time-evolving density to estimate LID.
We can divide those algorithms into the following three groups:
\begin{enumerate}
    \item \textbf{LIDL} \cite{tempczyk2022lidl} and its efficient and most accurate implementation using diffusion models called \textbf{FLIPD} \cite{kamkari2024geometric}.
    These algorithms use the rate of change of the probability density at point $x$ during the Wiener process to estimate LID at $x$.
    For small diffusion times $t$ the logarithm of a density is a linear function of a logarithm of $t$, and the proportionality coefficient is equal $d-D$ for small $t$, where $d$ is manifold density and $D$ is ambient space dimensionality.
    Our experiments with ID-NF (described below) and FLIPD show, that the latter is more scalable than ID-NF (and ID-DM) due to the high memory and computational complexity of SVD (which has to be calculated for each data point). Experiments from \cite{kamkari2024geometric} show that FLIPD is more accurate than NB (described below), which led us to the conclusion, that FLIPD is state-of-the-art in LID estimation among the most scalable algorithms.
    \item  \textbf{ID-NF} \cite{horvat2022intrinsic} (using normalizing flows), and the diffusion-using follow-up paper \textbf{ID-DM} \cite{horvat2024gauge} analyze how singular values of a Jacobian of a function transforming a standard normal distribution into a diffused dataset density at $x$ evolves during the Wiener process.
    \citeauthor{horvat2022intrinsic} observed that when transforming a $d$-dimensional manifold into a $D$-dimensional Gaussian we have to expand space more in the normal direction to manifold, especially for small diffusion times $t$.
    \item \textbf{NB} \cite{stanczuk2024diffusion}, which is an abbreviation from Normal Bundle (name used in \cite{kamkari2024geometric}).
    \citeauthor{stanczuk2024diffusion} observed that for small diffusion times $t$ the gradients of the logarithm of a diffused data density (score function) close to $x$ lies in the normal space to the manifold and use this fact to estimate LID at $x$.
    
\end{enumerate}

\section{Isolated and holistic algorithms}

In this work, we take a closer look at the Wiener process-based algorithms described in the last paragraph of Section~\ref{sec:related}.
Although all these algorithms apply a Wiener process to the dataset during their first phase, when looking at their second step, we can divide them into two groups: isolated (LIDL, FLIPD, NB) and holistic (ID-NF, ID-DM).
The intermediate results of the first group that are used for LID calculation use only the information about the local shape of the data probability density function (without normalization constant). We assume that the generative model approximates diffused data distribution $\rho_t$ perfectly.
Their estimates depend on the proximity of $\nabla\log\rho_t$ to $x$ (NB) or $d\log\rho_t/d\log t$ at $x$ (LIDL, FLIPD). 

This is a consequence of $\rho_t$ at $x$ being a function that -- in practice -- depends on the values of original data distribution $p_S$ in a ball of radius $r\approx4\sqrt{t}$ around $x$ in the data space.
The values of $p_S$ outside this ball do not matter in practice for isolated algorithms, because the diffused particles in the Wiener process can travel longer distances than $r$ with very low probability (less than $3.7\cdot10^{-5}$).
When we add some new data points to the dataset far away from $x$, it does not change the shape of $\rho_t$ close to $x$.
This operation only changes a normalization constant, which becomes 0 after taking a logarithm of $\rho_t$ and taking a derivative either w.r.t time or spatial variables.

The holistic algorithms work quite differently.
In the case of ID-NF and ID-DM, they calculate singular values of the Jacobian of the function $\zeta: \mathrm{R}^N \rightarrow \mathrm{R}^N$ transforming $\rho_t$ into a standard normal distribution.
This $\zeta$ function strongly depends on the entire shape of $\rho_t$.
As mentioned before, when we change $\rho_t$ far away from $x$ we do not change the shape of $\rho_t$ close to $x$.
The same is not true for $\zeta(x)$.
When we add many data points to the dataset far away from $x$ while we keep the latent distribution fixed, we have to change the way we compress and stretch the space to match the new distribution.
This property makes the analysis of the behavior of ID-NF and ID-DM much harder (maybe even impossible) if we want to take into account only the data density in the neighbourhood of $x$. 

To give an illustrative example of this behavior, one needs to imagine that our dataset is one-dimensional and consists of 10K points sampled from $\NN{0}{1}$.
Typically, we are transforming it into $\NN{0}{1}$, so $\zeta$ is just an identity function.
Now let's add to the dataset another 10K points sampled from $\NN{100}{1}$. 
As a consequence, we have to stretch our space in some areas to transform one density into another, whereas $\zeta$ changes along with its Jacobian.

\section{Wiener process perspective on LID estimation}

Wiener process is a stochastic process modeling particle diffusion. Its increments over disjoint time intervals are independent and normally distributed, with variance proportional to time increments. Since in the machine learning community the term \emph{diffusion} is already overloaded, we will stick to Wiener process when speaking of particle diffusion process.

In this section we present a new perspective on perturbing datasets, unifying the approaches seen in the algorithms presented by~\citet{tempczyk2022lidl,stanczuk2024diffusion,horvat2022intrinsic,horvat2024gauge,kamkari2024geometric}. As already mentioned, all these algorithms consist of two stages, the first of which amounts to perturbing the dataset with normally distributed random noise of fixed variance $t$. In the second stage, each of the algorithms utilizes the behavior of the perturbed density in the neighborhood of a fixed point under changes in the noise variance.

The first phase of each algorithm can be interpreted as applying the Wiener process to the points in the dataset. Afterward, the resulting set of points is used to train some type of generative model (or models) to estimate the distribution of the dataset undergoing the Wiener process at time $t$.
From the point of view of differential equations, the distribution density function of the diffused dataset is described by Fick's Second Law of Diffusion.

\begin{FSLD}  
Let \( \rho_t:\mathbb{R}^D \mapsto \mathbb{R} \) denote the probability density function modeling particles undergoing diffusion at time $t$. Then $\rho_t$ satisfies the differential equation
\begin{equation}
    \label{eq:heat-equation}
    \frac{d}{dt}\rho_t = C \Delta \rho_t,
\end{equation}
where \( C\in\RR \), and \( \Delta \) stands for the standard Laplacian in $\RR^D$.
\end{FSLD}

Now, given a dataset embedded in $\RR^D$, we assume that it has been drawn from some latent union of submanifolds $S$ endowed with a probability measure $p_S$ (which can be naturally treated as a probability measure on $\RR^D$).
The goal of Local Intrinsic Dimension estimation is to find out the dimension of $S$ at any point of the dataset.

To model the Wiener process with initial distribution $p_S$ (which is not a function on $\RR^D$), let us first define
\begin{equation}
    \phi_t^D(x)=(2\pi t)^{-D/2} e^{-\norm{x}^2/2t}.
\end{equation}
This is the density of normal distribution on $\RR^D$ with covariance matrix $tI$. 
It is the fundamental solution of the differential equation given by Fick's Second Law of Diffusion~\eqref{eq:heat-equation} with $C=1/2$. Here, this means that the convolution
\begin{equation}
    \label{eq:diffusion-solution-general}
    \rho_t = p_S * \phi_t^D
\end{equation}
is the solution of~\eqref{eq:heat-equation} for $t>0$ and hence it describes the Wiener process starting from the initial probability distribution $p_S$.

To limit the complexity introduced by curvature, from now on we will consider only flat manifolds. This means that, without loss of generality, we may assume that $S$ is the first factor in product decomposition $\RR^D = \RR^d\times\RR^{D-d}$. We will denote the coordinates of $\RR^d$ and $\RR^{D-d}$ by $x$ and $y$, respectively. We will moreover assume that $p_S$, now a probability distribution on $\RR^d$, has a density $\psi\colon\RR^d\to\RR$.

In Appendix~\ref{app:laplacian-diffused-density} we discuss the Laplacian of $\rho_t$ and derive the following result:

\begin{lemma}[Lemma~\ref{lem:app:laplacian-rho-t-everywhere}]
    \label{lem:laplacian-rho-t-everywhere}
    For $t>0$ and $(x,y) \in \RR^D$ we have
    \begin{equation}
        \label{eq:lem:laplacian-rho-t-off-manifold}
        \begin{split}
            \Delta \rho_t(x, y) & = \left(\frac{\norm{y}^2}{t^2} + \frac{d-D}{t}\right)\rho_t(x, y)\\
            & + \phi_t^{D-d}(y) \Delta_x(\psi * \phi_t^d)(x).            
        \end{split}
    \end{equation}
\end{lemma}

As a consequence, by putting $y=0$ and using 
\begin{equation}
    \phi_t^{D-d}(0) = (2\pi t)^{(d-D)/2}
\end{equation} 
we obtain the following. 

\begin{corollary}
    For $t>0$ and $x\in \RR^d$ we have
    \begin{equation}
        \Delta \rho_t(x, 0) = \frac{d-D}{t}\rho_t(x, 0) + (2\pi t)^{(d-D)/2} \Delta_x(\psi * \phi_t^d)(x). 
    \end{equation}
\end{corollary}

\section{From Wiener process to LID estimation}
The findings from the last section can be used to analyze how the first family of algorithms \citep{tempczyk2022lidl,kamkari2024geometric} behaves for some particular cases. 
The main contribution of \citet{kamkari2024geometric} is a substantial improvement on the side of density estimation.
Therefore, when dealing with perfect density estimators and very small noise differences, both algorithms estimate the same quantity and give the same results from the theoretical perspective.
Due to this fact from now on we will be analyzing LIDL, as we want to analyze the aspects of those implementations that do not depend on the problems with density estimation itself.

\subsection{Reformulating LIDL}
\label{sec:reformulating-lidl}

Given a point $x\in S$ and a set of times $t_1,\dots, t_n$, LIDL estimates the linear regression coefficient $\alpha$ of the set of points $(\log\delta_i, \log\rho_{t_i}(x))$, where $\delta_i = \sqrt{t_i}$.
\citet{tempczyk2022lidl} proved that 
\begin{equation}
    \label{eq:lidl}
    \log\rho_t(x) = (d-D)\log \sqrt{t} + O(1),
\end{equation}
and therefore $\alpha \approx d-D$. The authors show that if $t$ is small enough, this estimate is accurate.

This procedure can be seen as approximating the asymptotic slope of the parametric curve $(\log \sqrt{t}, \log\rho_t(x))$.
In other words, the graph of $s\mapsto \log\rho_{e^{2s}}(x)$ for $s\to-\infty$.
Another approach would consider the its derivative.
Let us define its reparameterized derivative (with $t=e^{2s})$
\begin{equation}
    \label{eq:def-beta-t}
    \beta_t(x) = \frac{2t}{\rho_{t}(x)} \frac{d}{dt}\rho_t(x) 
        = \frac{t \Delta\rho_t(x)}{\rho_{t}(x)},
\end{equation}
where the last equality comes from the diffusion equation~\eqref{eq:heat-equation} with $C=1/2$. Moreover, denote the asymptotic slope of the aforementioned curve by

\begin{equation}
    \label{eq:def-beta}
    \beta(x) = \lim_{s\to -\infty} \frac{d}{ds} \log\rho_{e^{2s}}(x) 
    =  \lim_{t\to 0^+} \beta_t(x).
\end{equation}

The results presented below are proved in Appendix~\ref{app:equivalent-formulations-of-lidl}. The next Proposition shows that the two approaches discussed above are equivalent. 
\begin{proposition}[Proposition~\ref{prop:app:equivalent-asymptotics}]
    \label{prop:equivalent-asymptotics}
    Given a strictly positive differentiable function $f\colon (0,a) \to (0,\infty)$ and a positive real number $\alpha > 0$, the following conditions are equivalent.
    \begin{enumerate}
        \item The function $f$ explodes at $0$ like $t^{-\alpha}$, i.e.\ for some positive constants $c,C > 0$ one has $c < t^{\alpha}f(t) < C$ for some $\epsilon>0$ and $t\in (0,\epsilon)$.
        \item $\log f(t) = -\alpha\log t + O(1)$.
        \item $\lim_{t\to 0^+} {\log f(t)}/{\log t} = -\alpha$.
        \item $\lim_{t\to 0^+} {tf'(t)}/{f(t)} = -\alpha$.
    \end{enumerate}
\end{proposition}

As a consequence, the estimation of Local Intrinsic Dimension using LIDL can be achieved by computing $\beta(x)$, yielding $d=D+\beta(x)$.
\begin{proposition}[Proposition~\ref{prop:app:equivalent-lidl}]
    For $t$ near $0$ the following estimate holds
    \begin{equation}
        \log\rho_t(x) = \beta(x)\log{\sqrt{t}} + O(1).
    \end{equation}  
\end{proposition}

The next proposition provides an elegant expression for $\beta_t(x)$, and consequently for $\beta(x)$, expressed in terms of the density $\psi$ on $\RR^d$.

\begin{proposition}[Proposition~\ref{prop:app:beta_final}]
\label{prop:beta_final}
    For $t>0$ and $x\in S = \RR^d \subseteq \RR^D$ we have
    \begin{equation}
        \label{eq:time-derivative-of-rho-t}
        \beta_t(x) = d-D +  \frac{\Delta_x(\psi * \phi_t^d)(x)}{\psi * \phi_t^d(x)} \cdot t.
    \end{equation}
\end{proposition}

\subsection{LIDL Examples}
\label{sec:examples}
From the theoretical considerations of~\citet{tempczyk2022lidl} it follows that $\beta(x)=d-D$ if $\psi$ is sufficiently regular and positive near $x$.
In other words,
\begin{equation}
    \lim_{t\to 0^+} \frac{\Delta_x(\psi *  \phi_t^d)(x)}{\psi * \phi_t^d(x)} \cdot t = 0.
\end{equation}
Now, we will try to obtain this conclusion directly and calculate bias of LIDL for $t>0$ in a few special cases by analyzing the behavior of $\beta_t(x)$.

\paragraph{The ``uniform distribution'' on Euclidean space.}
There is no such thing as the uniform distribution on $\RR^d$. However, from a purely theoretical viewpoint, in our differential equation approach we don't need the assumption of $\phi$ being a probability density; it could be any function. And since constant functions are usually the simplest examples, we will now investigate what happens if we put $\psi(x) \equiv 1$ on the whole $\RR^d$ space.

Using Proposition~\ref{prop:beta_final} and the fact that $\psi$ has bounded derivatives, this case leaves us with
\begin{equation}
    \beta_t(x) = d-D + \frac{\Delta_x\psi * \phi_t^d(x)}{\psi * \phi_t^d(x)}\cdot t = d-D,
\end{equation}
since $\Delta_x\psi \equiv 0$. This expression is constant in $t$, and in particular its limit at $0$ is $\beta(x)=d-D$. In this case, LIDL estimator is not biased for all $t>0$.

\paragraph{Normal distribution.}
Now consider the normal distribution on $\RR^d$ with covariance matrix $\Sigma = \mathop{\operatorname{diag}}(\sigma_1^2,\dots,\sigma_d^2)$, and denote its density function by $\psi$.
The convolution $\psi * \phi_t^d$ is the density of the normal distribution with covariance matrix $\Sigma + tI$. If we simplify notation be putting $\phi_i = \phi_{\sigma_i^2+t}^1$, we get
\begin{equation}
    \psi*\phi_t^d(x) = \prod_{i=1}^d \phi_i(x_i).
\end{equation}

To compute the Laplacian of this convolution, note that 
\begin{equation}
    \psi*\phi_t^d(x) = \frac{\psi*\phi_t^d(x)}{\phi_i(x_i)} \cdot \phi_i(x_i),
\end{equation}
where the first factor does not depend of $x_i$, and therefore
\begin{equation}
    \frac{\partial^2 (\psi*\phi_t^d)}{\partial x_i^2} (x) = \psi(x)*\phi_t^d(x) \cdot \frac{1}{\phi_i(x_i)}\frac{\partial^2\phi_i}{\partial x_i^2}(x_i),
\end{equation}
leading to 
\begin{equation}
    \begin{split}
        \beta_t(x) &= d - D + t\frac{\Delta (\psi*\phi_t^d)(x)}{\psi*\phi_t^d(x)} 
        = \\
        & = d - D + t\sum_{i=1}^d \frac{1}{\phi_i(x_i)}\frac{\partial^2\phi_i}{\partial x_i^2}(x_i) \\
        & = d-D + t\sum_{i=1}^d\frac{x^2_i - (\sigma_i^2 + t)}{(\sigma_i^2 + t)^2}.
    \end{split}
\end{equation}
It is easy to see that the second derivatives of $\phi_i$ are continuous in $t> -\sigma_i^2$, so the sum in the above expression has finite limit for $t\to 0$, and therefore $\beta(x)=d-D$.

In the special case where $\Sigma = \sigma^2 I$, these calculations simplify further, as $\psi * \phi_t^d = \phi_{\sigma^2+t}^d$, and since
\begin{equation}
    \Delta_x\phi_{\sigma^2+t}^d(x) = \left( \frac{\norm{x}^2}{(\sigma^2+t)^2} - \frac{d}{\sigma^2+t}\right) \phi_{\sigma^2+t}^d(x),
\end{equation}
we have
\begin{equation}
    \label{eq:beta-t-normal-dist}
    \beta_t(x) = d-D + \left( \frac{\norm{x}^2}{(\sigma^2+t)^2} - \frac{d}{\sigma^2+t}\right)t.
\end{equation}

\begin{figure}[t]
    \centering
    \begin{subfigure}[b]{0.47\textwidth}
        \centering
        \includegraphics[width=\textwidth]{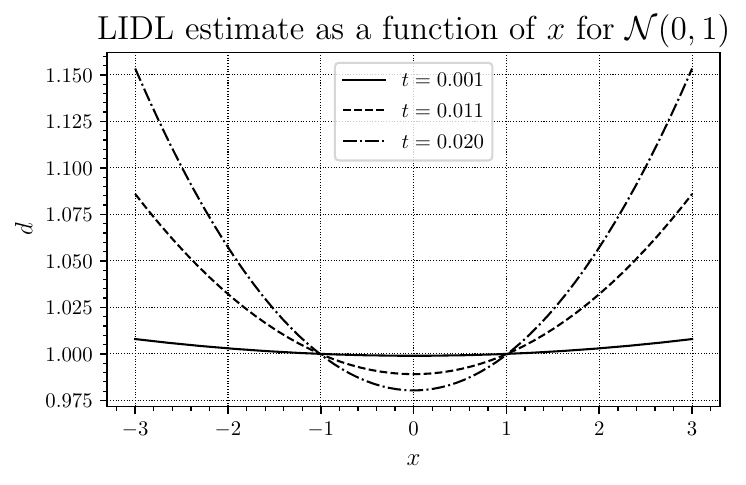}
        \caption{Example of the bias of a LIDL estimate for different points from $\mathcal{N}(0,1)$ and for different values of $t$. This plot recreates a numerical calculations presented in Figure~4 from \cite{tempczyk2022lidl}, with two minor differences described in Sec.~\ref{sec:examples}}
        \label{fig:plot_parabola}
    \end{subfigure}
    \hfill
    \begin{subfigure}[b]{0.47\textwidth}
        \centering
        \includegraphics[width=\textwidth]{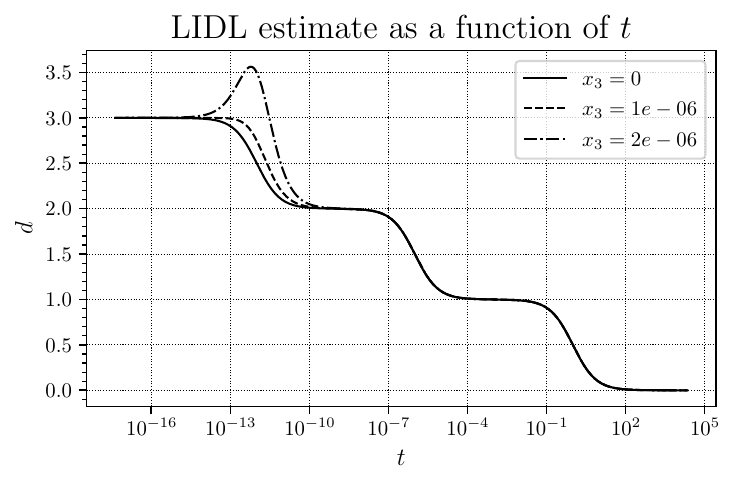}
        \caption{Plot of a LIDL estimate as a function of $t$ for the distribution $\mathcal{N}(\mathbf{0}, \mathop{\operatorname{diag}}(1,10^{-6},10^{-12}))$ and for three different points $\mathbf{x} = (0,0,x_3)$, which represents a distance of 0, 1 and 2 $\sigma_3$ from~0 on 3rd dimension (compare with Fig.~8 from FLIPD and Fig.~2 from LIDL).}
        \label{fig:plot_stairs}
    \end{subfigure}
    \caption{LIDL estimates for Gaussian distributions.}
    \label{fig:combined_plots}
\end{figure}

These results express analytically the experimental observations from LIDL and FLIPD papers, as can be verified by looking at Fig.~\ref{fig:plot_parabola}.
We can observe, that if we move to the regions of very low probability for a Gaussian, it generates very high positive bias, which may highly overestimate the true LID (also observed as a \emph{bump} at $t=10^{-12}$ in Fig.~\ref{fig:plot_stairs}).
Luckily, most of the points in our dataset come from the region of high probability, but we should be less certain of the estimates for points from low probability regions.

It is worth noting, that the values of $t$ used to generate Fig.~\ref{fig:plot_parabola} are the same as values of $\delta$, which is equal to $\sqrt{t}$ in our convention.
After double-checking our results we argue that the most probable cause of this is that the authors of LIDL used squared values of $\delta$ by mistake.
Additionally, one can observe that curves plotted by \citet{tempczyk2022lidl} are somewhat flatter than one in this study.
The fact that in their paper the derivative was approximated by linear regression on numerically calculated densities -- which may lead to slightly different results -- might be a possible reason.

\paragraph{Arbitrary distribution with sufficiently \emph{nice} density.}
By this point, the notion of \emph{nice} density is shall be more clear.
We want to be able to use the equality 
\begin{equation}
\Delta_x(\psi * \phi_t^d) = \Delta_x\psi * \phi_t^d.
\end{equation}
To do so, we need $\psi$ to be bounded, twice differentiable, and have bounded first and second-order partial derivatives.
We will also require $\psi$ to have \emph{continuous} second-order partial derivatives.
This is not a severe restriction, as numerous distributions satisfy these properties -- including the normal distribution or more generally, mixtures of Gaussians.

In this case, we have
\begin{equation}
    \beta_t(x) = d-D + \frac{\Delta_x\psi * \phi_t^d(x)}{\psi * \phi_t^d(x)} \cdot t,
\end{equation}
however this time $\Delta_x\psi$ is some arbitrary continuous function. Being differentiable, $\psi$ is also continuous, and we can use the general fact that for a bounded continuous function, $f$ on $\RR^d$ one has
\begin{equation}
    \lim_{t\to 0^+} f*\phi_t^d(x) = f(x).
\end{equation}
This gives us, for $x$ such that $\psi(x)>0$, 
\begin{equation}
    \lim_{t\to 0^+}  \frac{\Delta_x\psi * \phi_t^d(x)}{\psi * \phi_t^d(x)} \cdot t 
    = \frac{\Delta_x\psi(x)}{\psi(x)} \lim_{t\to 0^+} t
    = 0,
\end{equation}
and again $\beta(x) = d-D$.
It has been already proven that in this case $\beta$ yields a correct estimate of dimension, circumventing complexities of \citet{tempczyk2022lidl} proofs.

It is worth noting, that when $\Delta_x\psi = 0$, the estimate is accurate.
It is the case for the aforementioned ``uniform distribution'' on $\RR^d$, but it is also true if locally the density is a linear function of $x$.
In Fig.~\ref{fig:plot_parabola} we can observe that for $x\approx\pm1$ (Laplacian of a Gaussian density equals 0 at these points) and small values of $t$, the estimate is accurate.

\paragraph{Uniform distribution supported on an interval.}
Now consider an example where the density is not differentiable -- the uniform distribution on an interval $[a, b]\subset\RR$, i.e.
\begin{equation}
    \psi(x) = \frac{1}{b-a} \chi_{[a, b]}(x),
\end{equation}
where $\chi_A(s)$ is the indicator function of the set $A$, equal to $1$ on $A$ and $0$ outside $A$.
In the next example, we will generalize this to a hypercube, but the core observations can be made in this simpler 1-dimensional case.

The difficulty introduced by the non-differentiability of $\psi$ is we are no longer allowed to move the Laplacian inside the convolution to get 
\begin{equation}
    \Delta_x(\psi * \phi_t) = \Delta_x\psi * \phi_t
\end{equation} (we omit the superscript $d=1$ from $\phi_t$) -- as tempting as it might be.
Therefore, a different manner of proceeding is needed.
We may still move the Laplacian to $\phi_t$.
In the 1-dimensional case, $\Delta_x$ is simply the second derivative, and since
\begin{equation}
    \phi_t'(u)=-u\phi_t(u)/t
\end{equation}
we have
\begin{equation}
    \begin{split}
        \Delta_x(\psi * \phi_t)(x) 
        & = \frac{1}{b-a}\int_{x-b}^{x-a} \phi_t''(u)\,du \\
        & = \frac{\phi_t'(x-a) - \phi_t'(x-b)}{b-a} = \\
        & = \frac{(x-b)\phi_t(x-b)-(x-a)\phi_t(x-a)}{t(b-a)},
    \end{split}
\end{equation}
Expanding the denominator in a similar fashion yields
\begin{equation}
    \beta_t(x) = d - D + \frac{(x-b)\phi_t(x-b)-(x-a)\phi_t(x-a)}{\Phi_t(x-a) - \Phi_t(x-b)},
\end{equation}
where $\Phi_t$ is the cumulative distribution function corresponding to the density $\phi_t$.
In particular for $x\in(a,b)$ we see that since $x-b < 0 < x-a$, when $t\to 0^+$, the denominator tends to $1$, while both terms of the numerator tend to $0$, leaving us with $d-D$.
LIDL estimate curves for this case for different values of $t$ are plotted in Fig.~\ref{fig:plot_uniform}.

\begin{figure}[t]
    \centering
    \begin{subfigure}[b]{0.47\textwidth}
        \centering
        \includegraphics[width=\textwidth]{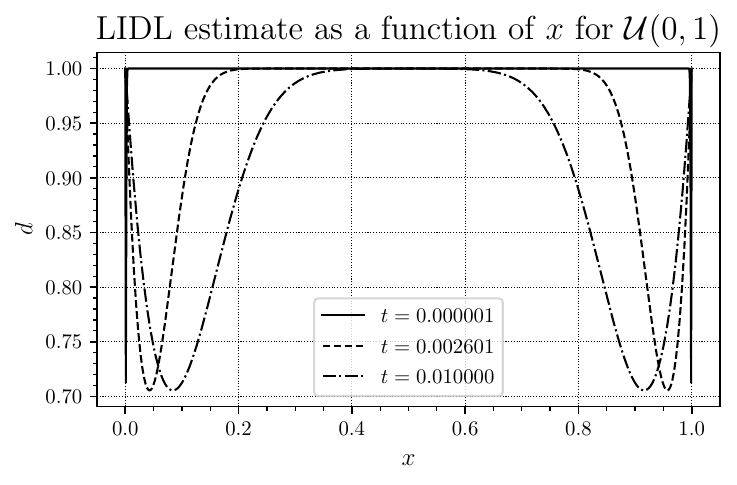}
        \caption{Example of the bias of a LIDL estimate for different points from $\mathcal{U}(0,1)$ and values of $t$. This plot recreates a numerical calculations presented in Figure~3 from \cite{tempczyk2022lidl}}
        \label{fig:plot_uniform}
    \end{subfigure}
    \hfill
    \begin{subfigure}[b]{0.47\textwidth}
        \centering
        \includegraphics[width=\textwidth]{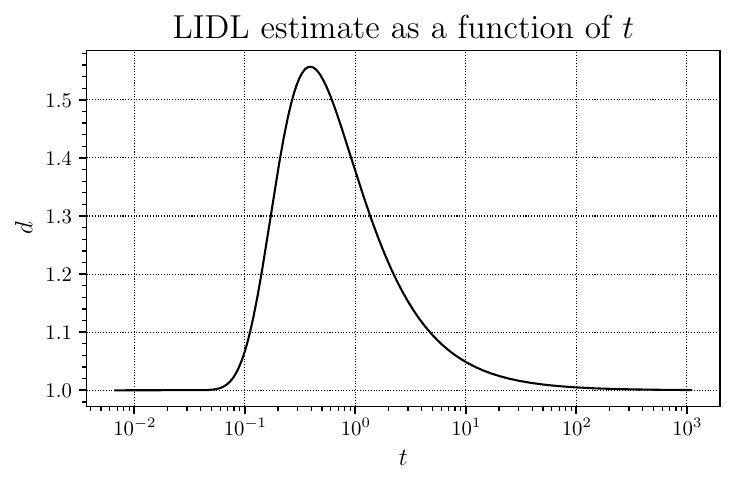}
        \caption{LIDL estimate as a function of t for a point from parallel 1D manifolds separated by a distance of 1 with uniform distribution on them. Similar to result from Fig.~6 in \cite{tempczyk2022lidl}.}
        \label{fig:plot_parallel}
    \end{subfigure}
    \caption{LIDL estimates.}
    \label{fig:combined_plots_uniform_parallel}
\end{figure}

\paragraph{Uniform distribution supported on a hypercube.}
Let us now consider a more general case -- the uniform distribution on a hypercube $[a_1, b_1] \times \dots\times [a_d, b_d] \subset \RR^d$.
We have 
\begin{equation}
    \psi(x) = \prod_{i=1}^d \frac{1}{b_i-a_i} \chi_{[a_i, b_i]}(x_i),
\end{equation}
Denote
\begin{equation}
    \psi_i(s) = \frac{1}{b_i-a_i} \chi_{[a_i, b_i]}(s),
\end{equation}
and observe that since $\phi_t^d(x)$ is the product of $\phi_t(x_i)$, we have
\begin{equation}
    \psi * \phi_t^d(x) = \prod_{i=1}^d \psi_i * \phi_t(x_i).
\end{equation}
By directly computing the derivatives, we obtain
\begin{equation}
    \frac{\Delta_x\psi * \phi_t^d(x)}{\psi * \phi_t^d(x)} = \sum_{i=1}^d \frac{(\psi_i * \phi_t)''(x_i)}{\psi_i * \phi_t(x_i)} = \sum_{i=1}^d \frac{\psi_i * \phi_t''(x_i)}{\psi_i * \phi_t(x_i)},
\end{equation}
reducing our problem to the $1$-dimensional variant we have dealt with in the preceding example. Summing up, we have
\begin{equation}
    \begin{split}
        \beta_t(x) & = d - D\\
        & + \sum_{i=1}^d \frac{(x_i-b_i)\phi_t(x_i-b_i)-(x_i-a_i)\phi_t(x_i-a_i)}{\Phi_t(x_i-a_i) - \Phi_t(x_i-b_i)},    
    \end{split}
\end{equation}
and the asymptotic behavior with $t\to 0^+$ follows the $1$-dimensional case.

\paragraph{Union of two parallel hyperplanes.}
Suppose that $S$ is a union of two parallel hyperplanes, $S_1 = \RR^d$ and $S_2 = v + \RR^d$, where $v \perp \RR^d$.
Moreover, assume that $p_S = (1-\lambda) p_1 + \lambda p_2$ is a convex combination of probability measures $p_i$ supported on $S_i$, with densities $\psi_i\colon\RR^d\to\RR$ (we identify $S_2$ with $\RR^d$ through the map $x\mapsto x+v$).
In this case, for $x\in S_1$ we have
\begin{equation}
    \beta_t^1(x) = d-D + \frac{\Delta_x(\psi_1 * \phi_t^d)(x)}{\psi_1 * \phi_t^d(x)} \cdot t,
\end{equation}
and by Lemma~\ref{lem:laplacian-rho-t-everywhere}, and the observation that 
\begin{equation}
\rho_t^2(x) = \psi_2*\phi_t^d(x)\phi_t^{D-d}(v),    
\end{equation}
we get
\begin{equation}
    \beta_t^2(x) = d-D + \frac{\norm{v}^2}{t} + \frac{\Delta_x(\psi_2*\phi_t^d)(x)}{\psi_2*\phi_t^d(x)}\cdot t.
\end{equation}

Here, we see that for an off-manifold point $x\not\in S_2$, the expression for $\beta_t^2(x)$ contains a summand $\norm{v}^2/t$ that explodes at $0$, and if the last term is under control, $\beta^2(x)$ is infinite.
However, by Lemma~\ref{lem:convex-combination-coeff-estimate}, the coefficient of $\beta_t^2(x)$ in the expansion of $\beta_t(x)$ from Lemma~\ref{lem:convex-combination} decreases exponentially in $1/t$, neutralizing this divergence.

For the remainder of this example, let us assume $\psi_1 = \psi_2 = \psi$. In this case, we may apply multiple simplifications, in particular
\begin{equation}
    \beta_t^2(x) = \beta_t^1(x) + \frac{\norm{v}^2}{t},
\end{equation}
and moreover
\begin{equation}
    \begin{split}
        \frac{\rho_t^2(x)}{\rho_t(x)} &= \\
        =& \frac{\psi * \phi_t^d(x)\phi_t^{D-d}(v)}{(1-\lambda)\psi * \phi_t^d(x)\phi_t^{D-d}(0) + \lambda\psi * \phi_t^d(x)\phi_t^{D-d}(v)} \\
        =& \frac{\phi_t^{D-d}(v)}{(1-\lambda)\phi_t^{D-d}(0) + \lambda\phi_t^{D-d}(v)} \\
        =& \frac{1}{(1-\lambda)e^{\norm{v}^2/2t} + \lambda}.
    \end{split}
\end{equation}
Since $\frac{\lambda \rho_t^1(x)}{\rho_t(x)} =1-\frac{\lambda \rho_t^2(x)}{\rho_t(x)} $ we can simplify the expression for $\beta_t(x)$ from Lemma~\ref{lem:convex-combination}, yielding
\begin{equation}
    \begin{split}
    \beta_t(x) &= \beta_t^1(x) + \frac{\norm{v}^2}{t} \cdot \frac{\lambda \rho^2_t(x)}{\rho_t(x)} \\
    & = \beta_t^1(x) + \frac{\lambda\norm{v}^2}{t\left( (1-\lambda)e^{\norm{v}^2/2t} + \lambda \right)}.
    \end{split}
\end{equation}
To give a concrete example, if $\psi = \phi_{\sigma^2}^d$, then 
\begin{equation}
    \begin{split}
        \beta_t(x) &= d-D + \left( \frac{\norm{x}^2}{(\sigma^2+t)^2} - \frac{d}{\sigma^2+t}\right)t \\
        &+ \frac{\lambda\norm{v}^2}{t\left( (1-\lambda)e^{\norm{v}^2/2t} + \lambda \right)}.    
    \end{split}
\end{equation}
Another interesting example occurs when the data on both parallel manifolds follow uniform density functions. 
Although the derivation in Eq.~\eqref{eq:convex_comb_of_beta} requires the density functions to be probability distributions, this scenario can be simulated by considering two Gaussian distributions with relatively large standard deviations.
This yields the following formula for $\beta_t(x)$, presented in Fig.~\ref{fig:plot_parallel} for $v=1$, $\lambda=\frac{1}{2}$:
\begin{equation}
    \beta_t(x) = d-D + \frac{\lambda\norm{v}^2}{t\left( (1-\lambda)e^{\norm{v}^2/2t} + \lambda \right)}.
\end{equation}

\paragraph{Union of two intersecting manifolds.}
In this example we will consider a manifold $S$ decomposing into a union of two components $S_1$ and $S_2$, intersecting at a point $x\in S_1 \cap S_2$. Denote by $d_i$ the dimension of $S_i$. As before, let $p_S = \lambda p_1 + (1-\lambda)p_2$. Moreover, suppose that
\begin{equation}
    \beta_t^i(x) = d_i - D + E_i(t),
\end{equation}
where $E_i(t)$ expresses the error of $\beta_t^i$ in estimating the dimension of $S_i$, and $\lim_{t\to 0^+}E_i(t) = 0$. By Lemma~\ref{lem:convex-combination} we have
\begin{equation}
    \beta_t(x) 
    = \left(\frac{\lambda \rho_t^1(x)}{\rho_t(x)}d_1 + \frac{(1-\lambda) \rho_t^2(x)}{\rho_t(x)}d_2 \right) 
    - D + E(t), 
\end{equation}
where the error term is a convex combination of $E_1$, and $E_2$, and thus is bounded by their maximum,
\begin{equation}
    \begin{split}
    E(t) &= \frac{\lambda \rho_t^1(x)}{\rho_t(x)}E_1(t) + \frac{(1-\lambda) \rho_t^2(x)}{\rho_t(x)}E_2(t)\\ &\leq \max\{ E_1(t), E_2(t)\}.
    \end{split}
\end{equation}
In particular, it also vanishes as $t\to 0^+$. 

The value of LID at $x$ estimated by $\beta_t(x)$ lies between $d_1$ and $d_2$, and is controlled by the asymptotic of $\lambda\rho_t^1(x)/\rho_t(x)$. If $d_1=d_2=d$, then it is also equal to $d$.

\section{Conclusions}

In this work, for the first time, we have outlined a new perspective for Wiener process-based algorithms for LID estimation and shown some results for LIDL and FLIPD, in which we found an analytical description for the phenomena observed in experiments. 

The presented results open up several promising new research directions. A natural extension of this study would involve accounting for the curvature of manifolds, addressing the current assumption of manifold flatness. 
Moreover, an interesting direction for future research could be an extended analysis of how nearby manifolds can affect LID estimates as briefly shown in the example of parallel manifolds in Sec.~\ref{sec:examples}.
Another potential research avenue is to apply the proposed approach to analyze other algorithms, such as the NB algorithm, in a manner similar to LIDL.

Finally, our brief experiments show that using density models to estimate Laplacian can be beneficial in the process of improving LIDL estimate for higher values of $t$ and non-uniform densities, leading to a promising future direction of research. Using Wiener process perspective for answering the question how dataset quantization can affect the estimate is another interesting question to answer in the area of LID estimation.

\section*{Acknowledgments}
We want to thank Marek Cygan, Micha\l{} Karpowicz and Adam Goli\'nski for their overall support during this project.

\paragraph{CRediT Author Statement.}
\textbf{Piotr Tempczyk} (32\% of the work): Conceptualization, Methodology, Software, Validation, Formal analysis, Investigation, Data Curation, Writing -- Original Draft, Writing -- Review \& Editing, Visualization, Supervision, Project administration.
\textbf{Adam Kurpisz} (32\% of the work): Conceptualization, Methodology, Validation, Formal analysis, Investigation, Writing -- Original Draft, Writing -- Review \& Editing, Visualization, Supervision.
\textbf{Łukasz Garncarek} (26\% of the work): Conceptualization, Methodology, Formal analysis, Investigation, Writing -- Original Draft, Writing -- Review \& Editing.
\textbf{Dominik Filipiak} (10\% of the work): Software, Data Curation, Writing -- Review \& Editing.

\bibliography{aaai25}

\appendix
\section{Additional clarifications}

In this section, we present some more detailed clarifications that could not be put into the main body of the paper due to the page limitations and were raised by the reviewers during the review process.

\paragraph{Which LID estimation algorithms assumptions are not met?}
Any of the existing algorithms for LID estimation assume that they give perfect estimates when diffusion times $t \to 0$.
But in most real-world problems, this assumption cannot be fulfilled exactly.
For example, when working with Normalizing Flows we cannot proceed with noise to 0 ($t \to 0$), because the training explodes for small values of $t$.
The same goes for working with quantized datasets, e.g. images, where there is no point in going with $\sqrt{t}$ below the quantization step.

All the algorithms assume that for a sufficiently small $t$ the local data density and manifold have some properties, e.g. the manifold is flat.
For different algorithms and data distributions, these \emph{small enough} values are, in general, not the same, but for any algorithm and any value of $t$ we can find a data manifold and density for which this value is too big and introduces an error of some kind.

\paragraph{Lack of experimental results.}
Reader may wonder why there are no experiments in this paper. The last term of Eq.~{\eqref{eq:time-derivative-of-rho-t} is currently not known to be computable for arbitrary datasets.
For this reason, further experiments with real datasets are not possible at this moment.
However, addressing the question of the computability of this ratio would immediately lead to an improved algorithm over state-of-the-art methods such as LIDL and FLIPD.

\subsection{Definitions}

Here we provide extended definitions of terms used in this work, which may be useful while reading the paper.

\paragraph{Manifold.}
A manifold is a mathematical space that locally resembles Euclidean space but may exhibit a more complex global structure.
In the context of LID estimation, a manifold represents the underlying low-dimensional structure within the ambient space where data points reside.
For instance, in generative modeling and representation learning, understanding the manifold helps in uncovering the true degrees of freedom that characterize a dataset, such as variations in facial expressions or object orientations.
For example, a dataset of faces may vary along dimensions such as age, expression, or pose, forming a manifold within the ambient space of all possible images.

\paragraph{Ambient Space.}
Ambient space is the high-dimensional space in which a manifold is embedded, encompassing all possible configurations, whether realized in the given data or not.
In the context of LID estimation, the ambient space is integral to analyzing how noise addition and diffusion processes perturb the data, as these processes operate within this space.
For instance, in image data, the ambient space might represent the set of all possible pixel configurations, while the data itself resides on a low-dimensional manifold embedded within this space.

\paragraph{Intrinsic Dimension.}
Intrinsic dimension refers to the number of independent variables or degrees of freedom required to describe the data.
Unlike the dimensionality of the ambient space, the intrinsic dimension captures the actual complexity of the data, reflecting the true number of parameters needed to represent its structure.

\paragraph{Wiener Process.}
A Wiener process, also known as Brownian motion, is a stochastic process characterized by continuous paths and Gaussian increments that are independent over time.
Within the context of LID estimation and deep learning, the Wiener process provides a mathematical framework for analyzing the impact of noise addition on data and the evolution of diffusion processes in the ambient space.
It allows modeling how perturbations influence data density and structure, thereby enabling more precise estimation of intrinsic dimensions in high-dimensional datasets.

\paragraph{Normalizing Flow.}
Normalizing flow is a class of generative modeling techniques that transforms a simple probability distribution, such as a Gaussian, into a complex distribution through a series of invertible mappings.
In the context of analyzing high-dimensional data and intrinsic dimensions, normalizing flows interact with manifold assumptions by enabling precise density estimation and characterizing data perturbations.
These transformations allow efficient computation of both the data likelihood and inverse mapping, making them valuable for exploring density changes and understanding the underlying data structure in generative models and deep learning applications.

\paragraph{Denoising Diffusion Probabilistic Models.} DDPMs represtent a type of generative model which learns to reverse a diffusion process.
Starting from Gaussian noise, such a model iteratively denoises the data through a series of steps guided by a learned probability distribution.
These models have demonstrated state-of-the-art performance in generating high-quality data, such as images, and are grounded in principles of stochastic processes and energy-based modeling.

\section{Laplacian of the diffused density}
\label{app:laplacian-diffused-density}

Assuming that $p_S$, considered as a probability distribution on $\RR^d$, has a density $\psi\colon\RR^d\to\RR$, we can separate variables in~\eqref{eq:diffusion-solution-general}, obtaining
\begin{equation}
\label{eq:rho_t}
    \rho_t(x, y) = \psi * \phi_t^d(x) \phi_t^{D-d}(y)
\end{equation}
for $(x,y)\in\RR^d\times\RR^{D-d}$. Moreover, the decomposition $\RR^D = \RR^d\times\RR^{D-d}$ gives rise to the decomposition of the Laplacian on $\RR^D$ into Laplacians on the factors, namely $\Delta = \Delta_x + \Delta_y$, by simply grouping the derivatives with respect to $x_i$ and $y_i$ coordinates.

\begin{lemma}
    \label{lem:laplacian-rho-t-everywhere}
    For $t>0$ and $(x,y) \in \RR^D$ we have
    \begin{equation}
        \label{eq:lem:laplacian-rho-t-off-manifold}
        \begin{split}
            \Delta \rho_t(x, y) & = \left(\frac{\norm{y}^2}{t^2} + \frac{d-D}{t}\right)\rho_t(x, y)\\
            & + \phi_t^{D-d}(y) \Delta_x(\psi * \phi_t^d)(x).            
        \end{split}
    \end{equation}
\end{lemma}

\begin{proof}
    By using the product decomposition~\eqref{eq:rho_t}, we get
    \begin{equation}
        \begin{split}
            \Delta &\rho_t(x,y) =\\
            &\psi * \phi_t^d(x) \Delta_y\phi_t^{D-d}(y) + \Delta_x (\psi * \phi_t^d)(x) \phi_t^{D-d}(y)    
        \end{split}
    \end{equation}
    To derive the first term, we note that a direct computation yields
    \begin{equation}
    \label{eq:laplacian_gaussian}
        \Delta_y\phi_t^{D-d}(y) = \phi_t^{D-d}(y) \left( \frac{\norm{y}^2}{t^2} + \frac{d-D}{t} \right). \qedhere
    \end{equation}
\end{proof}

As a consequence, by putting $y=0$ and using $\phi_t^{D-d}(0) = (2\pi t)^{(d-D)/2}$ we obtain the following. 

\begin{corollary}
    \label{cor:laplacian-rho-t-on-manifold}
    For $t>0$ and $x\in \RR^d$ we have
    \begin{equation}
        \Delta \rho_t(x, 0) = \frac{d-D}{t}\rho_t(x, 0) + (2\pi t)^{(d-D)/2} \Delta_x(\psi * \phi_t^d)(x). 
    \end{equation}
\end{corollary}

Let us stop here for a moment to discuss the expression $\Delta_x (\psi * \phi_t^d)$. When applying a differential operator to a convolution, under certain regularity conditions we can move it to either of the factors, 
which can turn out to be quite helpful. As it will turn out in the examples, $\Delta_x \psi * \phi_t^d$ will be the most desired form of $\Delta_x (\psi * \phi_t^d)$.

More precisely, given a convolution $f * g$, and a $k$-th order differential operator $L$, the following conditions guarantee that $L(f*g) = Lf * g$ (conditions for the other equality follow from commutativity of convolution):
\begin{enumerate}
    \item All partial derivatives of $f$ up to order $k$ (including $0$-th order, i.e.\ $f$ itself) exist and are bounded.
    \item The function $g$ is integrable, i.e. $\int \lvert{g}\rvert < \infty$.
\end{enumerate}
The second condition is clearly satisfied by densities of probability distributions, and $\phi^d_t$ satisfies the first condition. Hence, for any $\psi$ we can  write 
\begin{equation}
    \Delta_x (\psi * \phi_t^d) =  \psi * \Delta_x\phi_t^d.
\end{equation}
The second equality however requires additional assumptions;
\begin{equation}
    \label{eq:conv-first-factor}
    \Delta_x (\psi * \phi_t^d) =  \Delta_x\psi * \phi_t^d,
\end{equation}
holds if $\psi$ is bounded and has bounded partial derivatives up to order $2$.

\section{Equivalent formulations of LIDL}
\label{app:equivalent-formulations-of-lidl}
The following proposition allows us to conclude that the two approaches to LID estimation we discussed are equivalent.

\begin{proposition}
    \label{prop:equivalent-asymptotics}
    Given a strictly positive differentiable function $f\colon (0,a) \to (0,\infty)$ and a positive real number $\alpha > 0$, the following conditions are equivalent.
    \begin{enumerate}
        \item The function $f$ explodes at $0$ like $t^{-\alpha}$, i.e.\ for some positive constants $c,C > 0$ one has $c < t^{\alpha}f(t) < C$ for some $\epsilon>0$ and $t\in (0,\epsilon)$.
        \item $\log f(t) = -\alpha\log t + O(1)$.
        \item $\lim_{t\to 0^+} {\log f(t)}/{\log t} = -\alpha$.
        \item $\lim_{t\to 0^+} {tf'(t)}/{f(t)} = -\alpha$.
    \end{enumerate}
\end{proposition}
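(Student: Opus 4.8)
The plan is to prove the four conditions equivalent through a short cycle of implications, with the logarithm as the common bridge and L'Hôpital's rule as the only non-trivial ingredient. The pair (1) $\Leftrightarrow$ (2) is pure bookkeeping: taking logarithms of $c < t^{\alpha}f(t) < C$ turns it into $\log c < \alpha\log t + \log f(t) < \log C$, i.e.\ the assertion that $\log f(t) + \alpha\log t$ is bounded on some interval $(0,\epsilon)$, which is exactly (2), and exponentiating runs the argument backwards. Then (2) $\Rightarrow$ (3) is immediate: divide $\log f(t) = -\alpha\log t + O(1)$ by $\log t$ and let $t\to 0^{+}$, so that $\log t\to-\infty$ annihilates the $O(1)/\log t$ term and leaves $\log f(t)/\log t\to -\alpha$.

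The derivative condition (4) enters through the observation that $\frac{d}{dt}\log f(t) = f'(t)/f(t)$ while $\frac{d}{dt}\log t = 1/t$, so the quotient of these two derivatives is exactly $t f'(t)/f(t)$. Since $f$ is positive and differentiable and $\log t\to-\infty$ as $t\to 0^{+}$, the $\infty/\infty$ form of L'Hôpital's rule applies and gives (4) $\Rightarrow$ (3) at once: from $t f'(t)/f(t)\to-\alpha$ one concludes $\log f(t)/\log t\to -\alpha$.

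The step I expect to be the main obstacle is closing the loop — recovering (2), hence (1), from (3), and recovering (4) from the others — since L'Hôpital's rule runs only one way and cannot be inverted as it stands. I would handle this using the extra structure present in the intended application, where $f(t)=\rho_t(x)$ and the LIDL expansion $\log\rho_t(x) = (d-D)\log\sqrt t + O(1)$ of~\eqref{eq:lidl} already supplies the $O(1)$ control demanded by (2); in that setting (1) and (2) hold by construction, and the remaining task is to pin down the common exponent and to check, via the same L'Hôpital identity, that the limit defining $\beta(x)$ exists and agrees with it. For the statement in full generality I would reinforce the reverse implications with a mild extra hypothesis — for instance monotonicity of $t\mapsto\log f(t)+\alpha\log t$, or the existence of $\lim_{t\to0^{+}} tf'(t)/f(t)$ — under which those directions become routine.
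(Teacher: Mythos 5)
Your forward implications $(1)\Leftrightarrow(2)\Rightarrow(3)$ and $(4)\Rightarrow(3)$ are correct and essentially identical to the corresponding steps in the paper, but as a proof of the proposition the proposal is incomplete: the cycle of implications is never closed, and you explicitly punt the missing directions either to the specific application $f(t)=\rho_t(x)$ or to an added hypothesis, which changes the statement. The idea you are missing, and which the paper uses to close the loop, is the direct proof of $(4)\Rightarrow(1)$: set $g(t)=t^{\alpha}f(t)$, substitute $f(t)=t^{-\alpha}g(t)$ into $tf'(t)/f(t)\to-\alpha$, conclude that the logarithmic derivative of $g$ is controlled near $0$, hence that $\log g$ is Lipschitz and therefore bounded on some $(0,\epsilon)$ --- which is exactly condition (1). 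The paper obtains $(3)\Rightarrow(4)$ by invoking l'H\^opital after noting $\log f(t)\to\infty$, and then $(4)\Rightarrow(1)$ by the substitution above, completing the chain $1\Rightarrow2\Rightarrow3\Rightarrow4\Rightarrow1$.

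That said, your instinct that ``l'H\^opital cannot be inverted'' is sound and points at a genuine defect in the paper's own argument, not just a difficulty of yours. The implication $(3)\Rightarrow(4)$ is false as stated: $f(t)=t^{-\alpha}\left(2+\sin(1/t)\right)$ is strictly positive and differentiable, satisfies (1), (2) and (3), yet
\begin{equation*}
\frac{tf'(t)}{f(t)}=-\alpha-\frac{\cos(1/t)}{t\left(2+\sin(1/t)\right)}
\end{equation*}
is unbounded as $t\to0^{+}$, so (4) fails. Similarly $f(t)=t^{-\alpha}\log(1/t)$ satisfies (4) and (3) but not (1) or (2), so $(4)\Rightarrow(1)$ also fails; the paper's substitution actually yields $\lim_{t\to0^+} t\,(\log g)'(t)=0$ (a factor of $t$ is dropped in the displayed computation), i.e.\ $(\log g)'=o(1/t)$, which does not give the boundedness of $(\log g)'$ that the Lipschitz argument needs. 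The implications that genuinely hold are $(1)\Leftrightarrow(2)\Rightarrow(3)$ and $(4)\Rightarrow(3)$ --- precisely the ones you proved --- and your proposed repair (assume the limit in (4) exists, or impose monotonicity) is the right kind of fix, but it should be stated as a correction to the proposition rather than left as a remark at the end of the proof.
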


\begin{proof}
    All limits will be taken with $t\to 0$, and we will omit this from notation. 
    
    If $t < t^\alpha f(t) < C$, then $\log (t^\alpha f(t)) = O(1)$. Rearranging leads to point 2. Going further, dividing by $\log x$ and observing that $\lim O(1)/\log t = 0$ yields point 3.

    Now, assume condition 3. Since $\lim\log t = -\infty$, for the whole expression to tend to $-\alpha < 0$, we need $\lim\log f(t) = \infty$. In this case however we may apply the de l'Hospital rule giving rise to point 4.

    It remains to show the final implication. Let $g(t) = t^\alpha f(t) > 0$ be the function we want to bound. Substituting $f(t) = t^{-\alpha}g(t)$ into $\lim_{t\to 0} {tf'(t)}/{f(t)} = -\alpha$, we obtain
    \begin{equation}
        \begin{split}
                \lim \frac{t (-\alpha t^{-\alpha-1}g(t) + t^{-\alpha}g'(t))}{t^{-\alpha} g(t)} &= \\
                &= \lim \left( -\alpha + \frac{g'(t)}{g(t)} \right) \\
                &= -\alpha,
        \end{split}
    \end{equation}
    amounting to $\lim (\log g(t))' = 0$. But this means that for some $\epsilon>0$, the derivative $(\log g(t))'$ is bounded on $(0,\epsilon)$, implying that $\log g(t)$ is Lipschitz and therefore bounded on this interval. This yields the desired estimates.
\end{proof}

Under the notation of Section~\ref{sec:reformulating-lidl} we have the following two propositions.

\begin{proposition}
    \label{prop:app:equivalent-lidl}
    For $t$ near $0$ the following estimate holds
    \begin{equation}
        \log\rho_t(x) = \beta(x)\log{\sqrt{t}} + O(1).
    \end{equation}  
\end{proposition}
\begin{proof}
    By using Proposition~\ref{prop:equivalent-asymptotics} with $f(t) = \rho_t(x)$, we can see that its last condition holds with $\alpha = -\beta(x)/2$. Thus, the equivalent condition 2.\ yields the desired equality.
\end{proof}

The next proposition provides an elegant expression for $\beta_t(x)$, and consequently for $\beta(x)$, expressed in terms of the density $\psi$ on $\RR^d$.

\begin{proposition}
\label{prop:app:beta_final}
    For $t>0$ and $x\in S = \RR^d \subseteq \RR^D$ we have
    \begin{equation}
        \beta_t(x) = d-D +  \frac{\Delta_x(\psi * \phi_t^d)(x)}{\psi * \phi_t^d(x)} \cdot t.
    \end{equation}
\end{proposition}
\begin{proof}
    By \eqref{eq:def-beta-t} and Lemma~\ref{lem:laplacian-rho-t-everywhere} we obtain
    \begin{equation}
        \begin{split}
            \beta_t(x) &= \frac{t}{\rho_t(x)} \Delta \rho_t(x) \\
        &= d-D + \frac{t(2\pi t)^{(d-D)/2} \Delta_x(\psi * \phi_t^d)(x)}{\rho_t(x)},
        \end{split}
    \end{equation}
    where in the second term $(2\pi t)^{(d-D)/2}$ cancels out after expanding $\rho_t$ in the denominator.
\end{proof}

\section{Convex combinations of distributions}
\label{app:convex-combinations}

Our new approach allows to easily analyze the behavior of LIDL on unions of manifolds. Suppose that $S = \bigcup_i S_i$ is a finite union of manifolds. The measure $p_S$ can be then decomposed into a convex combination of probability measures $p_i$ supported on $S_i$, namely 
\begin{equation}
    \label{eq:convex-decomposition}
    p_S = \sum_i \lambda_i p_i,
\end{equation}
where $\lambda_i > 0$ and $\sum_i \lambda_i = 1$. With each $(S_i, p_i)$ we may associate their corresponding diffused density $\rho_t^i$, reparameterized slope $\beta_t^i$, and its limit $\beta^i$ through equations~\eqref{eq:diffusion-solution-general}, \eqref{eq:def-beta-t}, and~\eqref{eq:def-beta}. Due to bilinearity of convolution, analogous decomposition holds for $\rho_t$, namely
\begin{equation}
    \rho_t(x) = \sum_i \lambda_i \rho_t^i(x).
\end{equation}
Using these, we may now attempt to reduce the problem to the study of individual components.

In the following discussion it turns out that the decomposition of $S$ into a union of manifolds, and the fact that $p_i$ are supported on the components of this decomposition, are of no importance. The only essential assumption is the convex combination decomposition~\eqref{eq:convex-decomposition}. 

\begin{lemma}
    \label{lem:convex-combination}
    A convex combination decomposition $p_S=\sum_i\lambda_i p_i$, gives rise to a decomposition of $\beta_t(x)$ into a convex combination of $\beta_t^i(x)$,
    \begin{equation}
        \beta_t(x) = \sum_i \frac{\lambda_i \rho_t^i(x)}{\rho_t(x)} \beta_t^i(x).
    \end{equation}
    In particular, if all the limits below exist, one has
    \begin{equation}
        \beta(x) = \sum_i \left(\lim_{t\to 0^+} \frac{\lambda_i \rho_t^i(x)}{\rho_t(x)}\right) \beta^i(x).
    \end{equation}
\end{lemma}
\begin{proof}
    The desired decomposition can be obtained through direct expansion of the definition of $\beta_t$. We have
    \begin{equation}
        \begin{split}
        \beta_t(x) 
        & = \sum_i \frac{t\lambda_i\Delta\rho_t^i(x)}{\rho_t(x)} \\
        & = \sum_i \frac{\lambda_i\rho_t^i(x)}{\rho_t(x)} \cdot \frac{t\Delta \rho_t^i(x)}{\rho_t^i(x)} \\
        & = \sum_i \frac{\lambda_i \rho_t^i(x)}{\rho_t(x)} \beta_t^i(x).
    \end{split}
    \end{equation}
    The second assertion follows by taking the limit.
\end{proof}

Now take a closer look at the coefficients $\lambda_i \rho_t^i(x) / \rho_t(x)$.

\begin{lemma}
    \label{lem:convex-combination-coeff-estimate}
    Suppose that $p_S=\sum_i\lambda_i p_i$ is a convex combination decomposition, and for some $R>r>0$ there exist $i\ne j$ such that $p_i(B(x,R)) = 0$ and $p_j(B(x,r)) = C > 0$. Then the following estimate holds.
    \begin{equation}
        \frac{\lambda_i\rho_t^i(x)}{\rho_t(x)} \leq \frac{\lambda_i}{\lambda_i+ C\lambda_j e^{(R^2-r^2)/2t}}.
    \end{equation}
    Moreover, 
    \begin{equation}
        \lim_{t\to 0^+} \frac{\lambda_i\rho_t^i(x)}{\rho_t(x)} = 0,
    \end{equation}
    and $p_i$ does not contribute to $\beta(x)$.
\end{lemma}
\begin{proof}
    First, observe that the integral defining $\rho_t^i(x)$ can be estimated by the supremum of the integrand $\phi_t^D(x-y)$ over the support of $p_i$, contained in $B(x,R)^c$, namely
    \begin{equation}
    \label{eq:convex_comb_of_beta}
    \begin{split}
        \rho_t^i(x) &= \int \phi_t^D(x-y)\,dp_i(y) \\
        & \leq \sup_{y\in B(x,R)^c} \phi_t^D(x-y) = (2\pi t)^{-D/2} e^{-R^2/2t}.
    \end{split}
    \end{equation}
    Therefore, the ratio $\rho_t^j(x)/\rho_t^i(x)$ can be bounded from below as
    \begin{equation}
        \begin{split}
            \frac{\rho_t^j(x)}{\rho_t^i(x)} & 
            \geq \int e^{(R^2 - \norm{x-y}^2)/2t} \,dp_j(y) \\
            & \geq \int_{B(x,r)} e^{(R^2 - \norm{x-y}^2)/2t} \,dp_j(y) \\
            & \geq \int_{B(x,r)} e^{(R^2 - r^2)/2t} \,dp_j(y) = Ce^{(R^2 - r^2)/2t}.
        \end{split}
    \end{equation}
    Combining both these estimates, we end up with
    \begin{equation}
        \begin{split}
            \frac{\rho_t(x)}{\lambda_i\rho_t^i(x)}
            & = 1 + \sum_{k\ne i} \frac{\lambda_k\rho_t^k(x)}{\lambda_i\rho_t^i(x)}\\ & \geq  1 + \frac{\lambda_j\rho_t^j(x)}{\lambda_i\rho_t^i(x)} \geq 1 + \frac{\lambda_j}{\lambda_i} Ce^{(R^2 - r^2)/2t}.
        \end{split}
    \end{equation}
    Inverting and taking the limit yields the desired assertions.
\end{proof}

\end{document}